\newtheorem{proposition}{Proposition}
\def\xx{{\mathbf x}}
\def\zz{{\mathbf z}}
\definecolor{mygreen}{HTML}{167dde}
\definecolor{myred}{HTML}{f22835}
\colorlet{greenfill}{mygreen!20!white}
\colorlet{redfill}{myred!20!white}
\colorlet{moreredfill}{myred!40!white}
\title{GibbsNet: Iterative Adversarial Inference for Deep Graphical Models}
\begin{document}

\author {Alex Lamb \\ MILA, Universite de Montreal \\ lambalex@iro.umontreal.ca \And R Devon Hjelm \\ MILA, Universite de Montreal \\ erroneus@gmail.com \And Yaroslav Ganin \\ MILA, Universite de Montreal \\ yaroslav.ganin@gmail.com \And Joseph Paul Cohen \\ MILA, Universite de Montreal \\ Institute for Reproducible Research \\ joseph@josephpcohen.com \And Aaron Courville \\ MILA, Universite de Montreal \\ CIFAR  \\  aaron.courville@gmail.com \And Yoshua Bengio \\ MILA, Universite de Montreal \\ CIFAR \\ yoshua.umontreal@gmail.com}

\maketitle

\begin{abstract}


Directed latent variable models that formulate the joint distribution as $p(x,z) = p(z) p(x \mid z)$ have the advantage of fast and exact sampling. However, these models have the weakness of needing to specify $p(z)$, often with a simple fixed prior that limits the expressiveness of the model.  Undirected latent variable models discard the requirement that $p(z)$ be specified with a prior, yet sampling from them generally requires an iterative procedure such as blocked Gibbs-sampling that may require many steps to draw samples from the joint distribution $p(x, z)$.  We propose a novel approach to learning the joint distribution between the data and a latent code which uses an adversarially learned iterative procedure to gradually refine the joint distribution, $p(x, z)$, to better match with the data distribution on each step.  GibbsNet is the best of both worlds both in theory and in practice.  Achieving the speed and simplicity of a directed latent variable model, it is guaranteed (assuming the adversarial game reaches the virtual training criteria global minimum) to produce samples from $p(x, z)$ with only a few sampling iterations.  Achieving the expressiveness and flexibility of an undirected latent variable model, GibbsNet does away with the need for an explicit $p(z)$ and has the ability to do attribute prediction, class-conditional generation, and joint image-attribute modeling in a single model which is not trained for any of these specific tasks.  We show empirically that GibbsNet is able to learn a more complex $p(z)$ and show that this leads to improved inpainting and iterative refinement of $p(x, z)$ for dozens of steps and stable generation without collapse for thousands of steps, despite being trained on only a few steps.  

\end{abstract}

\section{Introduction}

Generative models are powerful tools for learning an underlying representation of complex data. 
While early undirected models, such as Deep Boltzmann Machines or DBMs~\citep{salakhutdinov2009deep}, showed great promise, practically they did not scale well to complicated high-dimensional settings (beyond MNIST), possibly because of optimization and mixing difficulties~\citep{bengio2012mixing}.  More recent work on Helmholtz machines~\citep{bornschein2015bidirection} and on variational autoencoders~\citep{kingma2013auto} borrow from deep learning tools and can achieve impressive results, having now been adopted in a large array of domains~\citep{larsen2015auto}.

\begin{figure}[ht]
\centering
\begin{tikzpicture}[
    black!50, text=black,
    node distance=4mm,
    dnode/.style={
        align=center,
        rectangle,minimum size=10mm,rounded corners,
        inner sep=5pt,
        left color=redfill,
        right color=greenfill},
    rnode/.style={
        align=center,
        rectangle,
        minimum width=27mm,
        minimum height=10mm,
        rounded corners,
        inner sep=5pt,
        very thick,draw=black!50},
    tuplenode/.style={
        align=center,
        rectangle,minimum size=10mm,rounded corners,
        inner sep=5pt},
    darrow/.style={
        rounded corners,-latex,shorten <=5pt,shorten >=1pt,line width=2mm}]

\matrix[row sep=20mm,column sep=8mm] {
    \node (z_0) [rnode] {$ \zz_0 \sim \mathcal{N}(0,I) $}; &
    \node (xt_i) [rnode,dotted] {$ \xx_i \sim p(\xx \, | \, \zz_i) $}; &
    \node (z_n) [rnode] {$ \zz_N \sim q(\zz \, | \, \xx_{N - 1}) $}; &
    \node (xt_n) [rnode] {$ \xx_N \sim p(\xx \, | \, \zz_N) $}; \\

    & 
    \node (z_ipo) [rnode,dotted] {$ \zz_{i + 1} \sim q(\zz \, | \, \xx_i) $}; &
    \node (zh_0) [rnode] {$ \hat{\zz} \sim q(\zz \, | \, \xx_{data}) $}; &
    \node (x_0) [rnode] {$ \xx_{data} \sim q(\xx) $}; \\
};

\coordinate (upper_mid) at ($ (z_n.center)!0.5!(xt_n.center) $);
\coordinate (lower_mid) at ($ (zh_0.center)!0.5!(x_0.center) $);
\coordinate (d_pos) at ($ (upper_mid)!0.5!(lower_mid) $);

\node (D) at (d_pos) [dnode] {$ D(\zz, \xx) $};

\begin{pgfonlayer}{background}
    \node (upper_tuple) [fit=(z_n) (xt_n),tuplenode,fill=redfill] {};
    \node (lower_tuple) [fit=(zh_0) (x_0),tuplenode,fill=greenfill] {};
\end{pgfonlayer}
\node (cycle) [fit=(xt_i) (z_ipo),rnode,dotted] {};

\draw[-latex,shorten >=6pt,very thick,dotted] (z_0) -- (xt_i);
\draw[-latex,shorten <=6pt,shorten >=1pt,very thick,dotted] (xt_i) -- (z_n);

\draw[-latex,shorten >=1pt,very thick] (z_n) -- (xt_n);
\draw[-latex,shorten >=1pt,very thick] (x_0) -- (zh_0);

\draw[darrow,draw=redfill] (z_n) |- (D);
\draw[darrow,draw=greenfill] (x_0) |- (D);

\draw[-latex,shorten <=1pt,shorten >=1pt,very thick,dotted] (xt_i) to [bend left=45] (z_ipo);
\draw[-latex,shorten <=1pt,shorten >=1pt,very thick,dotted] (z_ipo) to [bend left=45] (xt_i);

\end{tikzpicture}
\caption{Diagram illustrating the training procedure for GibbsNet. The \textbf{unclamped chain} (\textcolor{black!50}{dashed box}) starts with a sample from an isotropic Gaussian distribution $ \mathcal{N}(0, I) $ and runs for $ N $ steps. The last step (iteration $N$) shown as a \textcolor{myred}{solid pink box} is then compared with a single step from the \textbf{clamped chain} (\textcolor{mygreen}{solid blue box}) using joint discriminator $ D $.}
\label{fig:diagram}
\end{figure}
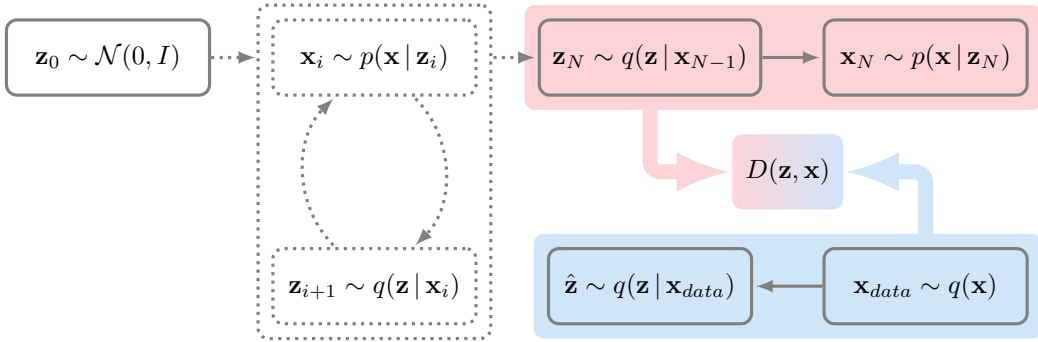

Many of the important generative models available to us rely on a formulation of some sort of stochastic latent or hidden variables along with a generative relationship to the observed data. Arguably the simplest is the \emph{directed graphical models} (such as the VAE) with a factorized decomposition $p(z, x) = p(z) p(x \mid z)$. In this, it is typical to assume that $p(z)$ follows some factorized prior with simple statistics (such as Gaussian). While sampling with directed models is simple, inference and learning tends to be difficult and often requires advanced techniques such as approximate inference using a proposal distribution for the true posterior. 

The other dominant family of graphical models are \emph{undirected graphical models}, such that the joint is represented by a product of clique potentials and a normalizing factor. It is common to assume that the clique potentials are positive, so that the un-normalized density can be represented by an energy function, $E$ and the joint is represented by $p(x,z) = e^{-E(z, x)} / Z$, where $Z$ is the normalizing constant or partition function. These so-called energy-based models (of which the Boltzmann Machine is an example) are potentially very flexible and powerful, but are difficult to train in practice and do not seem to scale well. Note also how in such models, the marginal $p(z)$ can have a very rich form (as rich as that of $p(x)$).

The methods above rely on a fully parameterized joint distribution (and approximate posterior in the case of directed models), to train with approximate maximum likelihood estimation~\citep[MLE,][]{dempster1977maximum}. Recently, generative adversarial networks~\citep[GANs,][]{goodfellow2014generative} have provided a likelihood-free solution to generative modeling that provides an implicit distribution unconstrained by density assumptions on the data. In comparison to MLE-based latent variable methods, generated samples can be of very high quality~\citep{radford2015dcgan}, and do not suffer from well-known problems associated with parameterizing noise in the observation space~\citep{goodfellow2016nips}. Recently, there have been advances in incorporating latent variables in generative adversarial networks in a way reminiscent of Helmholtz machines~\citep{dayan1995helmholtz}, such as adversarially learned inference~\citep{dumoulin2016ali, donahue2016bigan} and implicit variational inference~\citep{huszar2017implicit}.

These models, as being essentially complex directed graphical models, rely on approximate inference to train. While potentially powerful, there is good evidence that using an approximate posterior necessarily limits the generator in practice \citep{hjelm2016iterative,rezende2015variational}. In contrast, it would perhaps be more appropriate to start with inference (encoder) and generative (decoder) processes and derive the prior directly from these processes. This approach, which we call GibbsNet, uses these two processes to define a transition operator of a Markov chain similar to Gibbs sampling, alternating between sampling observations and sampling latent variables. This is similar to the previously proposed generative stochastic networks~\citep[GSNs,][]{bengio2013gsn} but with a GAN training framework rather than minimizing reconstruction error. By training a discriminator to place a decision boundary between the data-driven distribution (with $x$ clamped) and the free-running model (which alternates between sampling $x$ and $z$), we are able to train the model so that the two joint distributions $(x, z)$ match. This approach is similar to Gibbs sampling in undirected models, yet, like traditional GANs, it lacks the strong parametric constraints, i.e., there is no explicit energy function. While losing some the theoretical simplicity of undirected models, we gain great flexibility and ease of training. In summary, our method offers the following contributions: 

\begin{itemize}

    \item We introduce the theoretical foundation for a novel approach to learning and performing inference in deep graphical models. The resulting model of our algorithm is similar to undirected graphical models, but avoids the need for MLE-based training and also lacks an explicitly defined energy, instead being trained with a GAN-like discriminator.
    
    
     \item We present a stable way of performing inference in the adversarial framework, meaning that useful inference is performed under a wide range of architectures for the encoder and decoder networks.  This stability comes from the fact that the encoder $q(z \mid x)$ appears in both the clamped and the unclamped chain, so gets its training signal from both the discriminator in the clamped chain and from the gradient in the unclamped chain.  
    
    \item We show improvements in the quality of the latent space over models which use a simple prior for $p(z)$.  This manifests itself in improved conditional generation.  The expressiveness of the latent space is also demonstrated in cleaner inpainting, smoother mixing when running blocked Gibbs sampling, and better separation between classes in the inferred latent space.  
    
    \item Our model has the flexibility of undirected graphical models, including the ability to do label prediction, class-conditional generation, and joint image-label generation in a single model which is not explicitly trained for any of these specific tasks.  To our knowledge our model is the first model which combines this flexibility with the ability to produce high quality samples on natural images.  

\end{itemize}

\section{Proposed Approach: GibbsNet}

The goal of GibbsNet is to train a graphical model with transition operators that are defined and learned directly by matching the joint distributions of the model expectation with that with the observations clamped to data. 
This is analogous to and inspired by undirected graphical models, except that the transition operators, which correspond to blocked Gibbs sampling, are defined to move along a defined energy manifold, so we will make this connection throughout our formulation.

We first explain GibbsNet in the simplest case where the graphical model consists of a single layer of observed units and a single layer of latent variable with stochastic mappings from one to the other as parameterized by arbitrary neural network.
Like Professor Forcing~\citep{lamb2016professor}, 
GibbsNet uses a GAN-like discriminator to make two distributions match, one corresponding to the model iteratively sampling both observation, $x$, and latent variables, $z$ (free-running), and one corresponding to the same generative model but with the observations, $x$, clamped. 
The free-running generator is analogous to Gibbs sampling in Restricted Boltzmann Machines~\citep[RBM,][]{hinton2006fast} or Deep Boltzmann Machines~\citep[DBM,][]{salakhutdinov2009deep}. 
In the simplest case, the free-running generator is defined by conditional distributions $q(z|x)$ and $p(x|z)$ which stochastically map back and forth between data space $x$ and latent space $z$.  

To begin our free-running process, we start the chain with a latent variable sampled from a normal distribution: $z \sim \mathcal{N}(0,I)$, and follow this by $N$ steps of alternating between sampling from $p(x|z)$ and $q(z|x)$.  
For the clamped version, we do simple ancestral sampling from $q(z|x)$, given $x_{data}$ is drawn from the data distribution $ q(x) $ (a training example).
When the model has more layers (e.g., a hierarchy of layers with stochastic latent variables, \`a la DBM), the data-driven model also needs to iterate to correctly sample from the joint. 
While this situation highly resembles that of undirected graphical models, GibbsNet is trained adversarially so that its free-running generative states become indistinguishable from its data-driven states.
In addition, while in principle undirected graphical models need to either start their chains from data or sample a very large number of steps, we find in practice GibbsNet only requires a very small number of steps (on the order of 3 to 5 with very complex datasets) from noise.




An example of the free-running (unclamped) chain can be seen in Figure~\ref{fig:svhn_chains}.
An interesting aspect of GibbsNet is that we found that it was enough and in fact best experimentally to back-propagate discriminator gradients {\em through a single step of the iterative procedure}, yielding more stable training. An intuition for why this helps is that each step of the procedure is supposed to generate increasingly realistic samples.  However, if we passed gradients through the iterative procedure, then this gradient could encourage the earlier steps to store features which have downstream value instead of immediate realistic $x$-values.





\begin{figure}[ht]
\centering

\includegraphics[width=\textwidth]{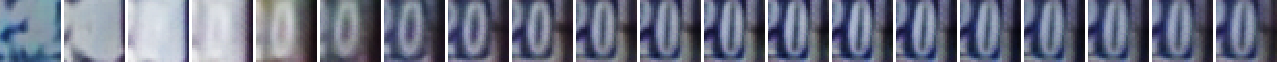}
\includegraphics[width=\textwidth]{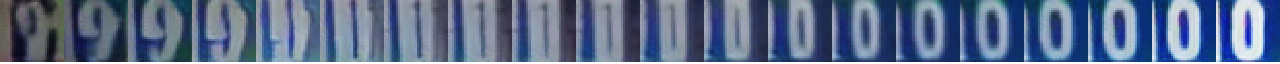}
\includegraphics[width=\textwidth]{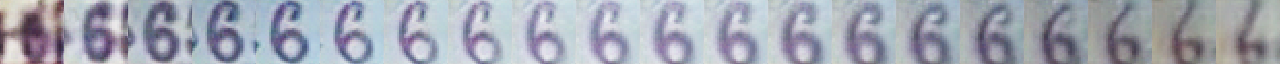}
\includegraphics[width=\textwidth]{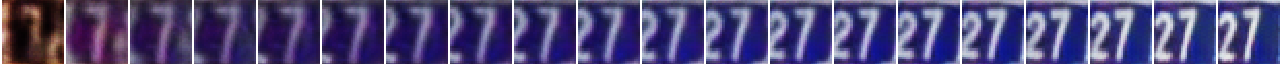}
\vspace{5pt}

\caption{Evolution of samples for 20 iterations from the unclamped chain, trained on the SVHN dataset starting on the left and ending on the right.}
\label{fig:svhn_chains}
\end{figure}

\subsection{Theoretical Analysis}

We consider a simple case of an undirected graph with single layers of visible and latent units trained with alternating 2-step ($p$ then $q$) unclamped chains and the asymptotic scenario where the GAN objective is properly optimized.
We then ask the following questions: in spite of training for a bounded number of Markov chain steps, are we learning a transition operator? Are the encoder and decoder estimating compatible conditionals associated with the stationary distribution of that transition operator? We find positive answers to both questions.  

A high level explanation of our argument is that if the discriminator is fooled, then the consecutive $(z,x)$ pairs from the chain match the data-driven $(z,x)$ pair. Because the two marginals on $x$ from these two distributions match, we can show that the next $z$ in the chain will form again the same joint distribution. Similarly, we can show that the next $x$ in the chain also forms the same joint with the previous $z$. Because the state only depends on the previous value of the chain (as it's Markov), then all following steps of the chain will also match the clamped distribution.  This explains the result, validated experimentally, that even though we train for just a few steps, we can generate high quality samples for thousands or more steps.  

\begin{proposition}
\label{prop:stationary}
If \textbf{(a)} the stochastic encoder $q(z|x)$ and stochastic decoder $p(x|z)$ inject noise such that the transition operator defined by their composition ($p$ followed by $q$ or vice-versa) allows for all possible $x$-to-$x$ or $z$-to-$z$ transitions ($x \rightarrow z \rightarrow x$ or $z \rightarrow x \rightarrow z$), and if 
\textbf{(b)} those GAN objectives are properly trained in the sense that the discriminator is fooled in spite of having sufficient capacity and training time, then \textbf{(1)} the Markov chain which alternates the stochastic encoder followed by the stochastic decoder as its transition operator $T$ (or vice-versa) has the data-driven distribution $\pi_D$ as its stationary distribution $\pi_T$, \textbf{(2)} the two conditionals $q(z|x)$ and $p(x|z)$ converge to compatible conditionals associated with the joint $\pi_D=\pi_T$.
\end{proposition}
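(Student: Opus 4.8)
The plan is to treat $\pi_D(x,z) = q(x)\,q(z\mid x)$ as the central object: it is the data-driven joint produced by the clamped chain, its $x$-marginal is the data distribution $q(x)$, and its $z$-marginal is the aggregated posterior $q(z) = \int q(x)\,q(z\mid x)\,dx$. Everything flows from the single distributional identity furnished by assumption (b). By the standard global-optimum characterization of the adversarial game applied to the joint variable $(z,x)$, a fooled discriminator means the terminal unclamped pair $(z_N,x_N)$ and the clamped pair $(\hat z, x_{data})$ have identical laws, so the law of $(z_N,x_N)$ equals $\pi_D$. First I would record this equality and use it to pin down both marginals and both conditionals of $\pi_D$.

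Next I would establish the compatibility claim (2), which is the linchpin for claim (1). The encoder side is immediate: since $\pi_D$ is \emph{defined} as $q(x)\,q(z\mid x)$, its conditional of $z$ given $x$ is exactly $q(z\mid x)$. The decoder side is where the argument has content. In the unclamped chain the terminal observation is generated as $x_N \sim p(x\mid z_N)$, so the terminal joint factorizes as $\rho(z)\,p(x\mid z)$, where $\rho$ is the marginal law of $z_N$. Equating this with $\pi_D$ and integrating out $x$ forces $\rho = q(z)$, and hence $p(x\mid z) = \pi_D(x,z)/q(z) = \pi_D(x\mid z)$ wherever $q(z) > 0$. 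Thus $q(z\mid x)$ and $p(x\mid z)$ are the two conditionals of the \emph{single} joint $\pi_D$.

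With both conditionals identified, claim (1) reduces to a one-line invariance check. Applying the transition operator $T$ (encoder then decoder, or vice versa) to a state distributed as $\pi_D$: starting from $x \sim q(x)$, the encoder step yields $(x,z)\sim \pi_D$ so that $z \sim q(z)$; the decoder step then produces $x'$ with $(z,x')\sim q(z)\,p(x'\mid z) = \pi_D$, whose $x'$-marginal is again $q(x)$. Hence $\pi_D$ is invariant under $T$; combined with assumption (a), which guarantees the chain can reach all states and therefore admits a unique stationary distribution, we conclude $\pi_T = \pi_D$. The same bootstrapping explains the stability observed for long runs: once step $N$ matches $\pi_D$ and $T$ preserves $\pi_D$, every later step stays at $\pi_D$ by the Markov property.

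I expect the main obstacle to be the rigorous justification of the two places where the argument leans on the hypotheses rather than on computation. The first is the passage from ``discriminator fooled'' to exact equality of the joint laws: this is the usual GAN global-minimum result, but it must be applied to the joint $(z,x)$ and relies on the encoder $q(z\mid x)$ being \emph{shared} across the clamped and unclamped chains, which is precisely what ties the two distributions together. The second is converting invariance into the statement that $\pi_D$ is \emph{the} stationary distribution: one must invoke the irreducibility furnished by the noise-injection hypothesis (a) to exclude competing invariant measures and to guarantee convergence, and one should track the support condition $q(z) > 0$ when dividing to extract $p(x\mid z)$. The genuinely creative step, by contrast, is cheap once the setup is in place: recognizing that the fooled discriminator forces the decoder to coincide with the exact conditional $\pi_D(x\mid z)$.
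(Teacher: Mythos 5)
Your proof is correct, and it rests on the same two pillars as the paper's: the fooled discriminator forces the law of the terminal unclamped pair $(z_N,x_N)$ to equal the clamped joint $\pi_D = q(x)\,q(z \mid x)$, and the noise-injection hypothesis (a) supplies ergodicity, hence uniqueness of the stationary distribution. But you organize the argument in the opposite order. The paper proves (1) first, by propagating the distributional match step by step along the chain: if $(z_t,x_t)\sim\pi_D$ then $x_t\sim\pi_D(x)$, so applying $q$ gives $(z_{t+1},x_t)\sim\pi_D$, and since $z_{t+1}$ then has the same law as $z_t$, applying $p$ gives $(z_{t+1},x_{t+1})\sim\pi_D$; thus $T$ fixes $\pi_D$. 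It then obtains (2) by running the same propagation on the ``odd'' pairs $(z_{t+1},x_t)$, concluding that the odd-pair operator is also stationary at $\pi_D$, and only then reads off $\pi_D(z,x)=\pi_D(z)\,p(x\mid z)$ and $\pi_D(z,x)=\pi_D(x)\,q(z\mid x)$. You instead extract (2) immediately from the factorization of the terminal unclamped joint --- $(z_N,x_N)\sim\rho(z)\,p(x\mid z)$ because $x_N$ is drawn from $p(\cdot\mid z_N)$, so equating with $\pi_D$ and marginalizing forces $\rho=\pi_D(z)$ and $p(x\mid z)=\pi_D(x\mid z)$ --- and then (1) collapses to a one-line invariance check. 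Your route is shorter, dispenses with the odd-pairs construction entirely, and makes explicit the support caveat ($q(z)>0$) that the paper glosses over when dividing by the marginal. What the paper's ordering buys is a direct explanation of the empirical claim motivating the proposition: the step-by-step propagation shows that every subsequent pair of the generated sequence (even and odd alike) stays at $\pi_D$, which is precisely why chains remain stable for thousands of steps despite being trained on only a few. Note finally that the paper's propagation step implicitly uses the very factorization fact you make explicit (applying $p$ to $z_{t+1}$, which has the same law as $z_t$, reproduces the law of $(z_t,x_t)$), so the mathematical content of the two proofs coincides; only the scaffolding differs.
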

\begin{proof}
When the stochastic decoder and encoder inject noise so that their composition forms a transition operator $T$ with paths with non-zero probability from any state to any other state, then $T$ is ergodic. So condition (a) implies that $T$ has a stationary distribution $\pi_T$. 
 The properly trained GAN discriminators for each of these two steps (condition (b)) forces the matching of the distributions of the pairs $(z_t,x_t)$ (from the generative trajectory) and $(x , z)$ with $x \sim q(x)$, the data distribution and $z \sim q(z \mid x)$, both pairs converging to the same data-driven distribution $\pi_D$. Because $(z_t,x_t)$
 has the same joint distribution as $(z, x)$, it means that $x_t$ has the same distribution as $x$. Since $z  \sim q(z \mid x)$,
 when we apply $q$ to $x_t$, we get $z_{t+1}$ which must form a joint $(z_{t+1},x_t)$ which has the same distribution as $(z,x)$. Similarly, since we just showed that $z_{t+1}$ has the same distribution as $z$ and thus the same as $z_t$, if we apply $p$ to $z_{t+1}$, we get $x_{t+1}$ and the joint $(z_{t+1},x_{t+1})$ must have the same distribution as $(z,x)$.
 Because the two pairs $(z_t,x_t)$ and $(z_{t+1},x_{t+1})$ have the same joint distribution $\pi_D$, it means that the transition operator $T$, that maps samples $(z_t,x_t)$ to samples $(z_{t+1},x_{t+1})$, maps $\pi_D$ to itself, i.e., $\pi_D=\pi_T$ is both the data distribution and the stationary distribution of $T$ and result (1) is obtained.
Now consider the "odd" pairs $(z_{t+1},x_t)$ and $(z_{t+2},x_{t+1})$ in the generated sequences. Because of (1), $x_{t}$ and $x_{t+1}$ have the same marginal distribution $\pi_D(x)$. Thus when we apply the same $q(z|x)$ to these $x$'s we obtain that 
$(z_{t+1},x_t)$ and $(z_{t+2},x_{t+1})$ also have the same distribution. Following the same reasoning as for proving (1), we conclude that the associated transition operator $T_{\rm odd}$ has also $\pi_D$ as stationary distribution. 
So starting from $z \sim \pi_D(z)$ and applying $p(x \mid z)$ gives an $x$ so that the pair $(z,x)$ has $\pi_D$ as joint distribution, i.e., $\pi_D(z,x) = \pi_D(z) p(x \mid z)$. This means that $p(x \mid z)=\frac{\pi_D(x,z)}{\pi_D(z)}$ is the $x \mid z$ conditional of $\pi_D$. Since $(z_t, x_t)$ also converges to joint distribution $\pi_D$, we can apply the same argument when starting from an $x \sim \pi_D(x)$ followed by $q$ and obtain that $\pi_D(z,x) = \pi_D(x) q(z \mid x)$ and so $q(z|x)=\frac{\pi_D(z,x)}{\pi_D(x)}$ is the $z \mid x$ conditional of $\pi_D$. This proves result (2).
\end{proof}

\subsection{Architecture}

GibbsNet always involves three networks: the inference network $q(z|x)$, the generation network $p(x|z)$, and the joint discriminator.  In general, our architecture for these networks closely follow ~\citet{dumoulin2016ali}, except that we use the boundary-seeking GAN \citep[BGAN,][]{hjelm2017boundary} as it explicitly optimizes on matching the opposing distributions (in this case, the model expectation and the data-driven joint distributions), allows us to use discrete variables where we consider learning graphs with labels or discrete attributes, and worked well across our experiments.  

\section{Related Work}

\paragraph{Energy Models and Deep Boltzmann Machines}
The training and sampling procedure for generating from GibbsNet is very similar to that of a deep Boltzmann machine~\citep[DBM,][]{salakhutdinov2009deep}: both involve blocked Gibbs sampling between observation- and latent-variable layers.  
A major difference is that in a deep Boltzmann machine, the ``decoder" $p(x|z)$ and ``encoder" $p(z|x)$ exactly correspond to conditionals of a joint distribution $p(x,z)$, which is parameterized by an energy function.
This, in turn, puts strong constraints on the forms of the encoder and decoder.  

In a restricted Boltzmann machine~\citep[RBM,][]{hinton2010practical}, the visible units are conditionally independent given the hidden units on the adjacent layer, and likewise the hidden units are conditionally independent given the visible units.  
This may force the layers close to the data to need to be nearly deterministic, which could cause poor mixing and thus make learning difficult.  
These conditional independence assumptions in RBMs and DBMs have been discussed before in the literature as a potential weakness in these models \citep{bengio2012mixing}.

In our model, $p(x|z)$ and $q(z|x)$ are modeled by separate deep neural networks with no shared parameters.
The disadvantage is that the networks are over-parameterized, but this has the added flexibility that these conditionals can be much deeper, can take advantage of all the recent advances in deep architectures, and have fewer conditional independence assumptions than DBMs and RBMs.  


\paragraph{Generative Stochastic Networks}
Like GibbsNet, generative stochastic networks~\citep[GSNs,][]{bengio2013gsn} also directly parameterizes a transition operator of a Markov chain using deep neural networks.
However, GSNs and GibbsNet have completely different training procedures.  
In GSNs, the training procedure is based on an objective that is similar to de-noising autoencoders~\citep{vincent2008extracting}. 

GSNs begin by drawing a sampling from the data, iteratively corrupting it, then learning a transition operator which de-noises it (i.e., reverses that corruption), so that the reconstruction after $k$ steps is brought closer to the original un-corrupted input.  

In GibbsNet, there is no corruption in the visible space, and the learning procedure never involves ``walk-back" (de-noising) towards a real data-point.  Instead, the processes from and to data are modeled by different networks, with the constraint of the marginal, $p(x)$, matches the real distribution imposed through the GAN loss on the joint distributions from the clamped and unclamped phases.

\paragraph{Non-Equilibrium Thermodynamics}
The Non-Equilibrium Thermodynamics method \citep{dickstein2015noneq} learns a reverse diffusion process against a forward diffusion process which starts from real data points and gradually injects noise until the data distribution matches a analytically tractible / simple distribution. This is similar to GibbsNet in that generation involves a stochastic process which is initialized from noise, but differs in that Non-Equilibrium Thermodynamics is trained using MLE and relies on noising + reversal for training, similar to GSNs above.

\paragraph{Generative Adversarial Learning of Markov Chains}
The Adversarial Markov Chain algorithm \citep[AMC,][]{song2017markov} learns a markov chain over the data distribution in the visible space.  GibbsNet and AMC are related in that they both involve adversarial training and an iterative procedure for generation.  However there are major differences.  GibbsNet learns deep graphical models with latent variables, whereas the AMC method learns a transition operator directly in the visible space.  The AMC approach involves running chains which start from real data points and repeatedly apply the transition operator, which is different from the clamped chain used in GibbsNet.  The experiments shown in Figure~\ref{fig:latent_disc} demonstrate that giving the latent variables to the discriminator in our method has a significant impact on inference.  

\paragraph{Adversarially Learned Inference (ALI)}
 Adversarially learned inference~\citep[ALI,][]{dumoulin2016ali} learns to match distributions generative and inference distributions, $p(x, z)$ and $q(x, z)$ (can be thought of forward and backward models) with a discriminator, so that $p(z) p(x \mid z) = q(x) q(z \mid x)$.  
In the single latent layer case, GibbsNet also has forward and reverse models, $p(x \mid z)$ and $q(z \mid x)$.  The un-clamped chain is sampled as $p(z), p(x \mid z), q(z \mid x), p(x \mid z), \dots$ and the clamped chain is sampled as $q(x), q(z \mid x)$.  We then adversarially encourage the clamped chain to match the equilibrium distribution of the unclamped chain.  When the number of iterations is set to $N = 1$, GibbsNet reduces to ALI.
However, in the general setting of $N > 1$, Gibbsnet should learn a richer representation than ALI, as the prior, $p(z)$, is no longer forced to be the simple one at the beginning of the unclamped phase.

\section{Experiments and Results}

The goal of our experiments is to explore and give insight into the joint distribution $p(x,z)$ learned by GibbsNet and to understand how this joint distribution evolves over the course of the iterative inference procedure.  Since ALI is identical to GibbsNet when the number of iterative inference steps is $N=1$, results obtained with ALI serve as an informative baseline.

From our experiments, the clearest result (covered in detail below) is that the $p(z)$ obtained with GibbsNet can be more complex than in ALI (or other directed graphical models).  This is demonstrated directly in experiments with $2$-D latent spaces and indirectly by improvements in classification when directly using the variables $q(z \mid x)$.  
We achieve strong improvements over ALI using GibbsNet even when $q(z \mid x)$ has exactly the same architecture in both models.  

We also show that GibbsNet allows for gradual refinement of the joint, $(x, z)$, in the sampling chain $q(z \mid x), p(x \mid z)$.
This is a result of the sampling chain making small steps towards the equilibrium distribution.  
This allows GibbsNet to gradually improve sampling quality when running for many iterations.  
Additionally it allows for inpainting and conditional generation where the conditioning information is not fixed during training, and indeed where the model is not trained specifically for these tasks.

\subsection{Expressiveness of GibbsNet's Learned Latent Variables}


\paragraph{Latent structure of GibbsNet} 
The latent variables from $q(z \mid x)$ learned from GibbsNet are more expressive than those learned with ALI.  We show this in two ways.  First, we train a model on the MNIST digits $0$, $1$, and $9$ with a $2$-D latent space which allows us to easily visualize inference.  As seen in Figure~\ref{fig:latent_disc}, we show that GibbsNet is able to learn a latent space which is not Gaussian and has a structure that makes the different classes well separated.  

\paragraph{Semi-supervised learning}
Following from this, we show that the latent variables learned by GibbsNet are better for classification. The goal here is not to show state of the art results on classification, but instead to show that the requirement that $p(z)$ be something simple (like a Gaussian, as in ALI) is undesirable as it forces the latent space to be filled. This means that different classes need to be packed closely together in that latent space, which makes it hard for such a latent space to maintain the class during inference and reconstruction.

We evaluate this property on two datasets: Street View House Number~\citep[SVHN,][]{netzer2011reading} and permutation invariant MNIST.  In both cases we use the latent features $q(z \mid x)$ directly from a trained model, and train a $2$-layer MLP on top of the latent variables, without passing gradient from the classifier through to $q(z \mid x)$.  ALI and GibbsNet were trained for the same amount of time and with exactly the same architecture for the discriminator, the generative network, $p(x \mid z)$, and the inference network, $q(z \mid x)$.  

On permutation invariant MNIST, ALI achieves $91$\% test accuracy and GibbsNet achieves $97.7$\% test accuracy.  On SVHN, ALI achieves $66.7$\% test accuracy and GibbsNet achieves $79.6$\% test accuracy.  This does not demonstrate a competitive classifier in either case, but rather demonstrates that the latent space inferred by GibbsNet keeps more information about its input image than the encoder learned by ALI.  This is consistent with the reported ALI reconstructions \citep{dumoulin2016ali} on SVHN where the reconstructed image and the input image show the same digit roughly half of the time.  

We found that ALI's inferred latent variables not being effective for classification is a fairly robust result that holds across a variety of architectures for the inference network.  For example, with $1024$ units, we varied the number of fully-connected layers in ALI's inference network between $2$ and $8$ and found that the classification accuracies on the MNIST validation set ranged from $89.4$\% to $91.0$\%.  Using $6$ layers with $2048$ units on each layer and a $256$ dimensional latent prior achieved $91.2$\% accuracy.   
This suggests that the weak performance of the latent variables for classification is due to ALI's prior, and is probably not due to a lack of capacity in the inference network.  





\begin{figure}[ht]
\centering
\includegraphics[width=\textwidth]{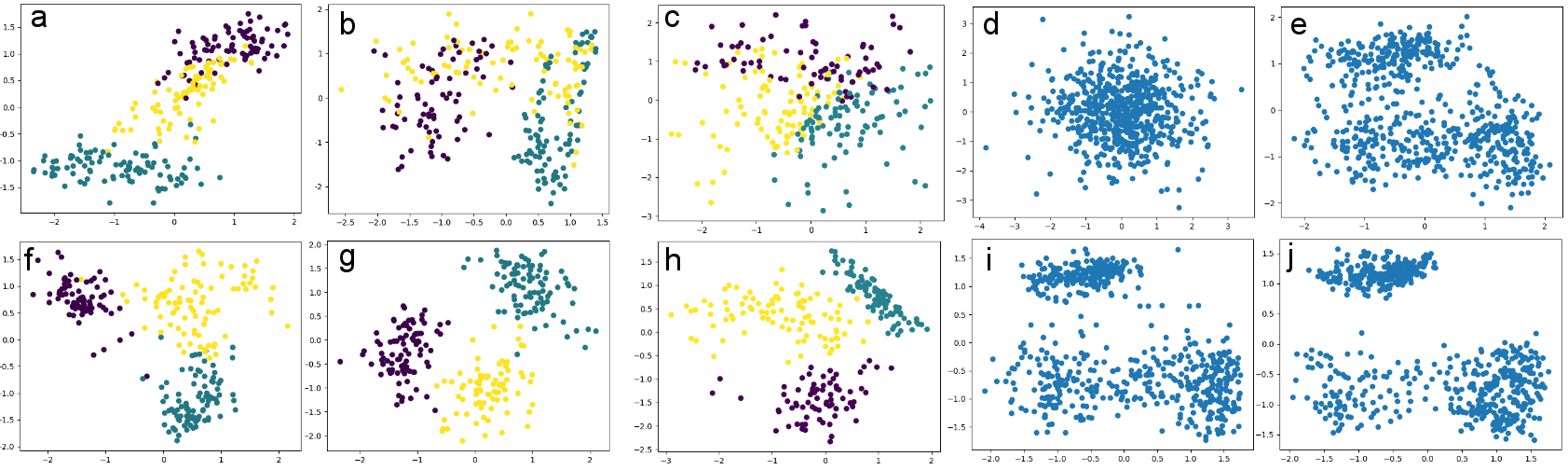}
\caption{Illustration of the distribution over inferred latent variables for real data points from the MNIST digits ($0$, $1$, $9$) learned with different models trained for roughly the same amount of time: GibbsNet with a determinstic decoder and the latent variables not given to the discriminator (a), GibbsNet with a stochastic decoder and the latent variables not given to the discriminator (b), ALI (c), GibbsNet with a deterministic decoder (f), GibbsNet with a stochastic decoder with two different runs (g and h), GibbsNet with a stochastic decoder's inferred latent states in an unclamped chain at $1$, $2$ , $3$, and $15$ steps (d, e, i, and j, respectively) into the P-chain (d, e, i, and j, respectively).  Note that we continue to see refinement in the marginal distribution of z when running for far more steps ($15$ steps) than we used during training ($3$ steps).}
\label{fig:latent_disc}
\end{figure}





\subsection{Inception Scores}

The GAN literature is limited in terms of quantitative evaluation, with none of the existing techniques (such as inception scores) being satisfactory \citep{theis2015note}.  Nonetheless, we computed inception scores on CIFAR-10 using the standard method and code released from \citet{salimans2016improved}.  
In our experiments, we compared the inception scores from samples from Gibbsnet and ALI on two tasks, generation and inpainting.

Our conclusion from the inception scores (Table~\ref{tb:inception_scores_table}) is that GibbsNet slightly improves sample quality but greatly improves the expressiveness of the latent space z, which leads to more detail being preserved in the inpainting chain and a much larger improvement in inception scores in this setting. The supplementary materials includes examples of sampling and inpainting chains for both ALI and GibbsNet which shows differences between sampling and inpainting quality that are consistent with the inception scores. 

{\renewcommand{\arraystretch}{1.2}
\begin{table}[ht]
\centering
\caption{Inception Scores from different models.  Inpainting results were achieved by fixing the left half of the image while running the chain for four steps.  Sampling refers to unconditional sampling.  }
\begin{tabular}{|l|l|l|l|} \hline
Source & Samples & Inpainting    \\ \hline
Real Images & 11.24 & 11.24    \\ \hline
ALI (ours) & 5.41 & 5.59    \\ \hline
ALI (Dumoulin) & 5.34 & N/A \\ \hline
GibbsNet & 5.69 & 6.15  \\ \hline
\end{tabular}
\label{tb:inception_scores_table}
\end{table}
}

\subsection{Generation, Inpainting, and Learning the Image-Attribute Joint Distribution}

\paragraph{Generation}
Here, we compare generation on the CIFAR dataset against Non-Equilibrium Thermodynamics method \citep{dickstein2015noneq}, which also begins its sampling procedure from noise.
We show in Figure~\ref{fig:cifar_samples} that, even with a relatively small number of steps ($20$) in its sampling procedure, GibbsNet outperforms the Non-Equilibrium Thermodynamics approach in sample quality, even after many more steps ($1000$).

\paragraph{Inpainting}
The inpainting that can be done with the transition operator in GibbsNet is stronger than what can be done with an explicit conditional generative model, such as Conditional GANs, which are only suited to inpainting when the conditioning information is known about during training or there is a strong prior over what types of conditioning will be performed at test time.
We show here that GibbsNet performs more consistent and higher quality inpainting than ALI, even when the two networks share exactly the same architecture for $p(x \mid z)$ and $q(z \mid x)$ (Figure~\ref{fig:test}), which is consistent with our results on latent structure above.  

\paragraph{Joint generation}
Finally, we show that GibbsNet is able to learn the joint distribution between face images and their attributes \citep[CelebA,][]{liu2015faceattributes} (Figure~\ref{fig:attributes}).
In this case, $q(z \mid x, y)$ ($y$ is the attribute) is a network that takes both the image and attribute, separately processing the two modalities before joining them into one network.
$p(x, y \mid z)$ is one network that splits into two networks to predict the modalities separately.
Training was done with continuous boundary-seeking GAN~\citep[BGAN,][]{hjelm2017boundary} on the image side (same as our other experiments) and discrete BGAN on the attribute side, which is an importance-sampling-based technique for training GANs with discrete data.


\begin{figure}
\centering
\begin{subfigure}{.5\textwidth}
  \centering
  \includegraphics[width=0.92\linewidth,trim={0cm 0cm 0cm 4.4cm},clip]{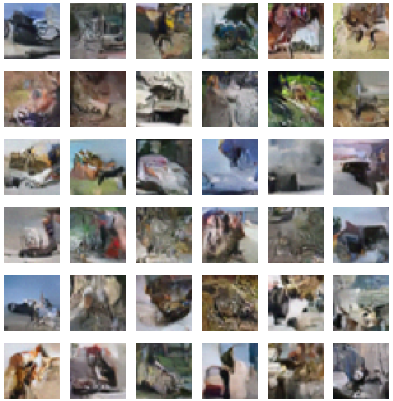}
  \label{fig:cif_sub1}
\end{subfigure}%
\begin{subfigure}{.5\textwidth}
  \centering
  \includegraphics[width=0.95\linewidth,trim={0cm 0cm 0cm 9cm},clip]{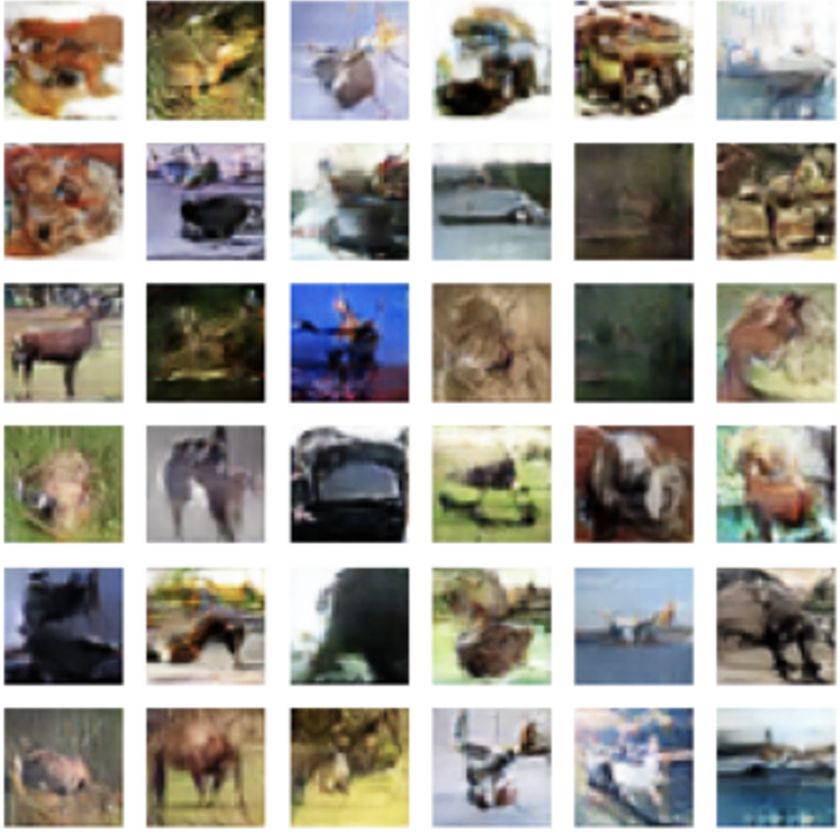}
  \label{fig:cif_sub2}
\end{subfigure}
\caption{CIFAR samples on methods which learn transition operators.  Non-Equilibrium Thermodynamics \citep{dickstein2015noneq} after 1000 steps (left) and GibbsNet after 20 steps (right).  }
\label{fig:cifar_samples}
\end{figure}

\begin{figure}[ht]
\centering
\begin{subfigure}{.5\textwidth}
  \centering
  \includegraphics[width=0.95\linewidth,trim={2cm 2cm 2cm 2cm},clip]{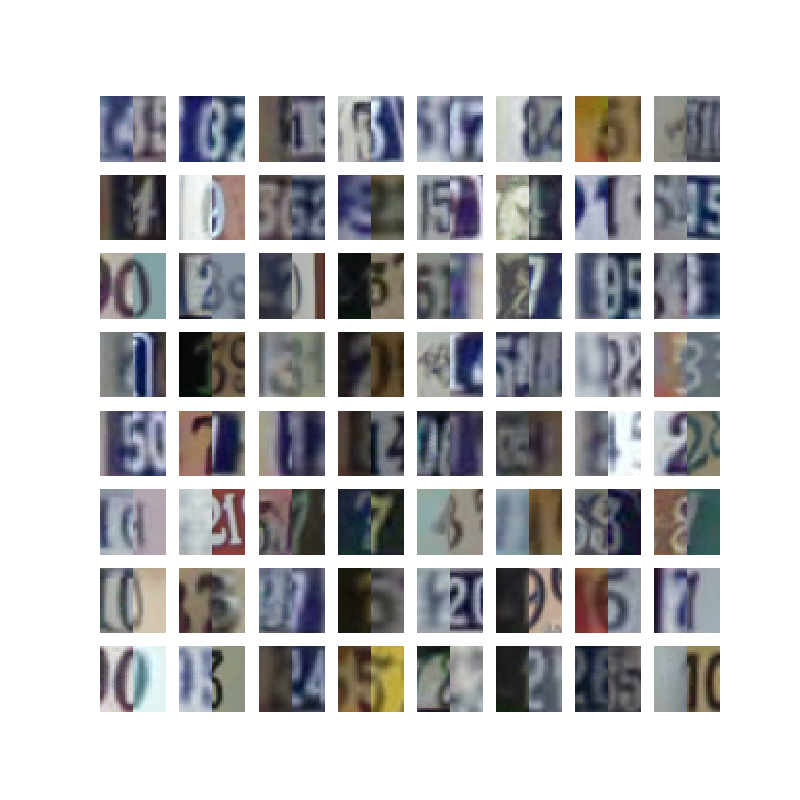}
  \caption{SVHN inpainting after 20 steps (ALI).  }
  \label{fig:sub1}
\end{subfigure}%
\begin{subfigure}{.5\textwidth}
  \centering
  \includegraphics[width=0.95\linewidth,trim={2cm 2cm 2cm 2cm},clip]{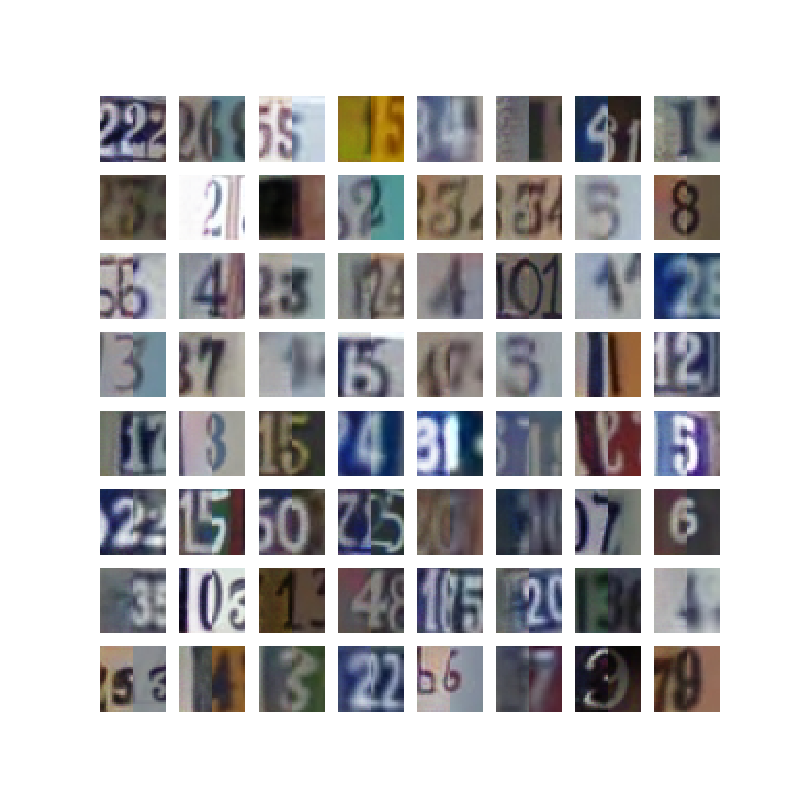}
  \caption{SVHN inpainting after 20 steps (GibbsNet).  }
  \label{fig:sub2}
\end{subfigure}
\caption{Inpainting results on SVHN, where the right side is given and the left side is inpainted.  In both cases our model's trained procedure did not consider the inpainting or conditional generation task at all, and inpainting is done by repeatedly applying the transition operators and clamping the right side of the image to its observed value.  GibbsNet's richer latent space allows the transition operator to keep more of the structure of the input image, allowing for tighter inpainting.}
\label{fig:test}
\end{figure}

\begin{figure}[ht]
\centering
\includegraphics[width=\textwidth]{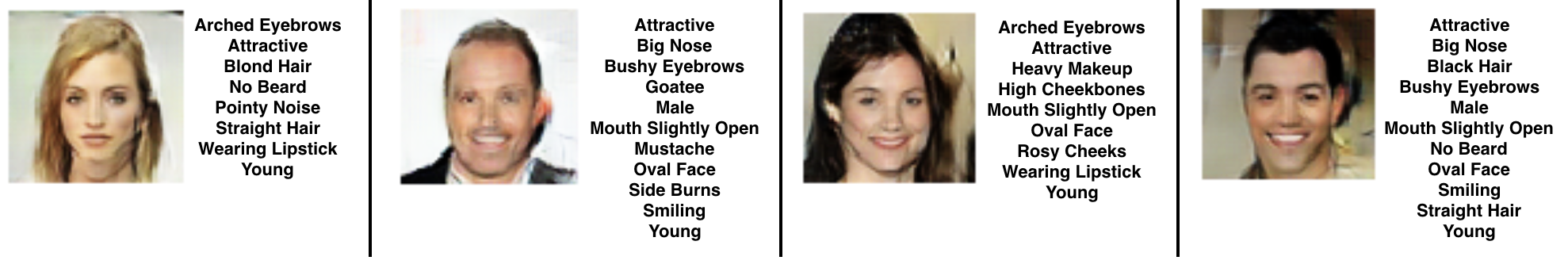}
\caption{Demonstration of learning the joint distribution between images and a list of 40 binary attributes. 
Attributes (right) are generated from a multinomial distribution as part of the joint with the image (left).}
\label{fig:attributes}
\end{figure}

  
  

\section{Conclusion}

We have introduced GibbsNet, a powerful new model for performing iterative inference and generation in deep graphical models.  Although models like the RBM and the GSN have become less investigated in recent years, their theoretical properties worth pursuing, and we follow the theoretical motivations here using a GAN-like objective. 
With a training and sampling procedure that is closely related to undirected graphical models, GibbsNet is able to learn a joint distribution which converges in a very small number of steps of its Markov chain, and with no requirement that the marginal $p(z)$ match a simple prior.  
We prove that at convergence of training, in spite of unrolling only a few steps of the chain during training, we obtain a transition operator whose stationary distribution also matches the data and makes the conditionals $p(x \mid z)$ and $q(z \mid x)$ consistent with that unique joint stationary distribution. 
We show that this allows the prior, $p(z)$, to be shaped into a complicated distribution (not a simple one, e.g., a spherical Gaussian) where different classes have representations that are easily separable in the latent space.  
This leads to improved classification when the inferred latent variables $q(z|x)$ are used directly.  
Finally, we show that GibbsNet's flexible prior produces a flexible model which can simultaneously perform inpainting, conditional image generation, and prediction with a single model not explicitly trained for any of these specific tasks, outperforming a competitive ALI baseline with the same setup. 

\bibliographystyle{apalike}
\bibliography{bibliography}

\end{document}